\theoremstyle{plain}
\newtheorem{theorem}{Theorem}[section]
\theoremstyle{remark}
\newtheorem{remark}{Remark}[section]
\numberwithin{equation}{section}
\title[A feedforward network with only one neuron in the hidden layer]{A single hidden layer feedforward network with only one neuron in the hidden layer can approximate any univariate function}
\author{Namig J. Guliyev}
\address{Institute of Mathematics and Mechanics, Azerbaijan National Academy of Sciences, 9 B.~Vahabzadeh str., AZ1141, Baku, Azerbaijan.}
\email{njguliyev@gmail.com}
\author{Vugar E. Ismailov}
\address{Institute of Mathematics and Mechanics, Azerbaijan National Academy of Sciences, 9 B.~Vahabzadeh str., AZ1141, Baku, Azerbaijan.}
\email{vugaris@mail.ru}
\subjclass[2010]{41A30, 65D15, 92B20}
\keywords{Sigmoidal functions, $\lambda$-monotonicity, Calkin--Wilf sequence, Bernstein polynomials, Continued fractions, Smooth transition function}
\begin{document}
\maketitle
\begin{abstract}
The possibility of approximating a continuous function on a compact subset of the real line by a feedforward single hidden layer neural network with a sigmoidal activation function has been studied in many papers. Such networks can approximate an arbitrary continuous function provided that an unlimited number of neurons in a hidden layer is permitted. In this paper, we consider constructive approximation on any finite interval of $\mathbb{R}$ by neural networks with only one neuron in the hidden layer. We construct algorithmically a smooth, sigmoidal, almost monotone activation function $\sigma$ providing approximation to an arbitrary continuous function within any degree of accuracy. This algorithm is implemented in a computer program, which computes the value of $\sigma$ at any reasonable point of the real axis.
\end{abstract}

\section{Introduction}

Neural networks are being successfully applied across an extraordinary range of problem domains, in fields as diverse as computer science, finance, medicine, engineering, physics, etc. The main reason for such popularity is their ability to approximate arbitrary functions. For the last 30 years a number of results have been published showing that the artificial neural network called a \emph{feedforward network with one hidden layer} can approximate arbitrarily well any continuous function of several real variables. These results play an important role in determining boundaries of efficacy of the considered networks. But the proofs are usually do not state how many neurons should be used in the hidden layer. The purpose of this paper is to prove constructively that a neural network having only one neuron in its single hidden layer can approximate arbitrarily well all continuous functions defined on any compact subset of the real axis.

The building blocks for neural networks are called \emph{neurons}. An artificial neuron is a device with $n$ real inputs and an output. This output is generally a superposition of a univariate function with an affine function in the $n$-dimensional Euclidean space, that is a function of the form $\sigma(w_{1} x_{1} + \cdots + w_{n} x_{n} - \theta)$. The neurons are organized in layers. Each neuron of each layer is connected to each neuron of the subsequent (and thus previous) layer. Information flows from one layer to the subsequent layer (thus the term feedforward). A feedforward neural network with one hidden layer has three layers: input layer, hidden layer, and output layer. A feedforward network with one hidden layer consisting of $r$ neurons computes functions of the form
\begin{equation} \label{eq:intro}
c_{0} + \sum_{i=1}^{r} c_{i} \sigma (w_{i} \cdot x - \theta_{i}).
\end{equation}
Here the vectors $w_{i}$, called \emph{weights}, are vectors in $\mathbb{R}^{n}$; the \emph{thresholds} $\theta_{i}$ and the coefficients $c_{i}$ are real numbers and $\sigma$ is a univariate activation function. The following are common examples of activation functions:
\begin{align*}
\sigma(t) &= \frac{1}{1+e^{-t}} &&\text{(the squashing function),}\\
\sigma(t) &= \begin{cases}
0, &t \leq -1, \\
\dfrac{t+1}{2}, &-1 \leq t \leq 1, \\
1, &t \geq 1
\end{cases} &&\text{(the piecewise linear function),}\\
\sigma(t) &= \frac{1}{\pi }\arctan t+\frac{1}{2} &&\text{(the arctan sigmoid function),}\\
\sigma(t) &= \frac{1}{\sqrt{2\pi }}\int\limits_{-\infty }^{t}e^{-x^{2}/2}dx &&\text{(the Gaussian function).}
\end{align*}

In many applications, it is convenient to take the activation function $\sigma$ as a \emph{sigmoidal function} which is defined as
\begin{equation*}
\lim_{t \to -\infty} \sigma(t) = 0 \quad \text{ and } \quad \lim_{t \to +\infty} \sigma(t)=1.
\end{equation*}
The literature on neural networks abounds with the use of such functions and their superpositions. Note that all the above activation functions are sigmoidal.

In approximation by neural networks, there are two main problems. The first is the \emph{density} problem of determining the conditions under which an arbitrary target function can be approximated arbitrarily well by neural networks. The second problem, called the \emph{complexity} problem, is to determine how many neurons in hidden layers are necessary to give a prescribed degree of approximation. This problem is almost the same as the problem of degree of approximation (see \cite{Barron, Cao, Hahm}). The possibility of approximating a continuous function on a compact subset of the real line (or $n$-dimensional space) by a single hidden layer neural network with a sigmoidal activation function has been well studied in a number of papers. Different methods were used. Carroll and Dickinson \cite{Carroll} used the inverse Radon transformation to prove the universal approximation property of single hidden layer neural networks. Gallant and White \cite{Gallant} constructed a specific continuous, nondecreasing sigmoidal function, called a cosine squasher, from which it was possible to obtain any Fourier series. Thus their activation function had the density property. Cybenko \cite{Cybenko} and Funahashi, \cite{Funahashi} independently from each other, established that feedforward neural networks with a continuous sigmoidal activation function can approximate any continuous function within any degree of accuracy on compact subsets of $\mathbb{R}^{n}.$ Cybenko's proof uses the functional analysis method, combining the Hahn--Banach Theorem and the Riesz Representation Theorem, whiles Funahashi's proof applies the result of Irie and Miyake \cite{Irie} on the integral representation of $f \in L_{1}(\mathbb{R}^{n})$ functions, using a kernel which can be expressed as a difference of two sigmoidal functions. Hornik, Stinchcombe and White \cite{HornikSt} applied the Stone--Weierstrass theorem, using trigonometric functions.

K\r{u}rkov\'{a} \cite{Kurkova2} proved that staircase like functions of any sigmoidal type can approximate continuous functions on any compact subset of the real line within an arbitrary accuracy. This is effectively used in K\r{u}rkov\'{a}'s subsequent results, which show that a continuous multivariate function can be approximated arbitrarily well by two hidden layer neural networks with a sigmoidal activation function (see~\cite{Kurkova1, Kurkova2}).

Chen, Chen and Liu \cite{Chen} extended the result of Cybenko by proving that any continuous function on a compact subset of $\mathbb{R}^{n}$ can be approximated by a single hidden layer feedforward network with a bounded (not necessarily continuous) sigmoidal activation function. Almost the same result was independently obtained by Jones \cite{Jones}.

Costarelli and Spigler \cite{Costarelli} reconsidered Cybenko's approximation theorem and for a given function $f \in C[a,b]$ constructed certain sums of the form (\ref{eq:intro}), which approximate $f$ within any degree of accuracy. In their result, similar to \cite{Chen}, $\sigma$ is bounded and sigmoidal. Therefore, when $\sigma \in C(\mathbb{R})$, the result can be viewed as a density result in $C[a,b]$ for the set of all functions of the form (\ref{eq:intro}).

Chui and Li \cite{Chui} proved that a single hidden layer network with a continuous sigmoidal activation function having integer weights and thresholds can approximate an arbitrary continuous function on a compact subset of $\mathbb{R}$. Ito \cite{Ito} established a density result for continuous functions on a compact subset of $\mathbb{R}$ by neural networks with a sigmoidal function having only unit weights. Density properties of a single hidden layer network with a restricted set of weights were studied also in other papers (for a detailed discussion see \cite{Ismailov1}).

In many subsequent papers, which dealt with the density problem, nonsigmoidal activation functions were allowed. Among them are the papers by Stinchcombe and White \cite{Stinchcombe}, Cotter \cite{Cotter}, Hornik \cite{Hornik}, Mhaskar and Micchelli \cite{Mhaskar}, and other researchers. The more general result in this direction belongs to Leshno, Lin, Pinkus and Schocken \cite{Leshno}. They proved that the necessary and sufficient condition for any continuous activation function to have the density property is that it not be a polynomial. For a detailed discussion of most of the results in this section, see the review paper by Pinkus \cite{Pinkus}.

It should be remarked that in all the above mentioned works the number of neurons $r$ in the hidden layer is not fixed. As such to achieve a desired precision one may take an excessive number of neurons. This, in turn, gives rise to the problem of complexity (see above).

Our approach to the problem of approximation by single hidden layer feedforward networks is different and quite simple. We consider networks (\ref{eq:intro}) defined on $\mathbb{R}$ with a limited number of neurons ($r$ is fixed!) in a hidden layer and ask the following fair question: is it possible to construct a well behaved (that is, sigmoidal, smooth, monotone, etc.) universal activation function providing approximation to arbitrary continuous functions on any compact set in $\mathbb{R}$ within any degree of precision? We show that this is possible even in the case of a feedforward network with only one neuron in its hidden layer (that is, in the case $r = 1$). The basic form of our theorem claims that there exists a smooth, sigmoidal, almost monotone activation function $\sigma$ with the property: for each univariate continuous function $f$ on the unit interval and any positive $\varepsilon$ one can chose three numbers $c_{0}$, $c_{1}$ and $\theta$ such that the function $c_{0} + c_{1}\sigma(t - \theta)$ gives $\varepsilon$-approximation to $f$. It should be remarked that we prove not only the existence result but also give an algorithm for constructing the mentioned universal sigmoidal function. For a wide class of Lipschitz continuous functions we also give an algorithm for evaluating the numbers $c_{0}$, $c_{1}$ and $\theta$.

For numerical experiments we used SageMath \cite{Sage}. We wrote a code for creating the graph of $\sigma$ and computing $\sigma(t)$ at any reasonable $t \in \mathbb{R}$. The code is open-source and available at {\tt http://sites.google.com/site/njguliyev/papers/sigmoidal}.

\section{The theoretical result}

We begin this section with the definition of a $\lambda$-increasing ($\lambda$-decreasing) function. Let $\lambda$ be any nonnegative number. A real function $f$ defined on $(a, b)$ is called $\lambda$-increasing ($\lambda$-decreasing) if there exists an increasing (decreasing) function $u \colon (a, b) \to \mathbb{R}$ such that $\left\vert f(x) - u(x)\right\vert
\le \lambda$, for all $x \in (a, b)$. If $u$ is strictly increasing (or strictly decreasing), then the above function $f$ is called a $\lambda$-strictly increasing (or $\lambda$-strictly decreasing) function. Clearly, $0$-monotonicity coincides with the usual concept of monotonicity and a $\lambda_{1}$-increasing function is $\lambda_{2}$-increasing if $\lambda_{1} \le \lambda_{2}$.

The following theorem is valid.

\begin{theorem}
For any positive numbers $\alpha$ and $\lambda$, there exists a $C^{\infty}(\mathbb{R})$, sigmoidal activation function $\sigma \colon \mathbb{R} \to \mathbb{R}$ which is strictly increasing on $(-\infty, \alpha)$, $\lambda$-strictly increasing on $[\alpha, +\infty)$, and satisfies the following property: For any finite closed interval $[a, b]$ of $\mathbb{R}$ and any $f \in C[a, b]$ and $\varepsilon > 0$ there exist three real numbers $c_{0}$, $c_{1}$ and $\theta$ for which
\begin{equation*}
\left| f(t) - c_{1} \sigma \left( \frac{\alpha}{b-a} t - \theta \right) - c_{0} \right| < \varepsilon
\end{equation*}
for all $t \in [a, b]$.
\end{theorem}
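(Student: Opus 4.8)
The plan is to build a single function $\sigma$ that secretly stores approximations to \emph{every} continuous function, and then to recover a given target by choosing the three free parameters $c_{0}$, $c_{1}$, $\theta$ so as to ``select'' and rescale the appropriate stored copy. The key observation is that the prescribed dilation $\frac{\alpha}{b-a}$ sends $[a,b]$ onto an interval of length exactly $\alpha$, so the threshold $\theta$ can translate this length-$\alpha$ image to sit over any preassigned window of length $\alpha$ on the real axis. I would therefore reserve, in the region $[\alpha,+\infty)$, a countable sequence of pairwise disjoint windows, each of length $\alpha$ and separated by buffer zones, and arrange that the restriction of $\sigma$ to the $n$-th window is an affine image of the $n$-th member of a fixed countable family that is dense in $C[0,1]$.

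First I would fix the dense family. Transporting $f \in C[a,b]$ to $h(s) := f(a + (b-a)s) \in C[0,1]$ by an affine change of variable, it suffices to approximate functions on $[0,1]$. I would take the family to be all polynomials with rational coefficients; to make the construction effective I would enumerate their coefficient vectors through the Calkin--Wilf enumeration of the rationals, and use Bernstein polynomials to furnish an explicit approximation rate for Lipschitz targets (this is what later yields usable formulas for $c_{0}$, $c_{1}$, $\theta$). Density then guarantees that for any $h$ and any $\varepsilon > 0$ some family member $g_{n}$ satisfies $\|h - g_{n}\|_{C[0,1]} < \varepsilon$.

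Next I would define $\sigma$ piecewise. On $(-\infty,\alpha)$ I would take any fixed smooth strictly increasing function rising from the limit $0$ at $-\infty$. On the $n$-th window in $[\alpha,+\infty)$ I would set $\sigma = p_{n} + q_{n}\, g_{n}(\ell_{n}(\cdot))$, where $\ell_{n}$ is the increasing affine bijection carrying the window onto $[0,1]$, the base levels $p_{n}$ form a strictly increasing sequence tending to $1$, and the amplitudes $q_{n} > 0$ are chosen small. Two devices make this legitimate. To obtain $\sigma \in C^{\infty}(\mathbb{R})$ I would interpolate across the buffer zones with a fixed smooth transition (cutoff) function, keeping $\sigma$ equal to the exact affine copy on the interior of each window so that recovery stays exact. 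To obtain the sigmoidal limit $\lim_{t\to+\infty}\sigma(t)=1$ I would use $p_{n}\to 1$ together with $q_{n}\to 0$, so the stored oscillations die out; and to obtain $\lambda$-strict monotonicity on $[\alpha,+\infty)$ I would take each $q_{n}$ small enough that $q_{n}\|g_{n}\|_{\infty}\le\lambda$ and let the buffer zones (together with a slight tilt inside each window) raise the base level strictly, so that a strictly increasing trend $u$ threading the levels $p_{n}$ satisfies $|\sigma - u| \le \lambda$ throughout.

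Finally, for the approximation claim, given $f$, $[a,b]$ and $\varepsilon$, I would pick $g_{n}$ within $\varepsilon$ of the transported target $h$, choose $\theta$ so that $\frac{\alpha}{b-a}t-\theta$ maps $[a,b]$ exactly onto the $n$-th window, and set $c_{1}=1/q_{n}$ and $c_{0}=-p_{n}/q_{n}$ to invert the affine encoding; then on $[a,b]$ the output $c_{1}\sigma\bigl(\frac{\alpha}{b-a}t-\theta\bigr)+c_{0}$ equals $g_{n}(\ell_{n}(\cdot))$, which is within $\varepsilon$ of $f$. The main obstacle is the simultaneous fulfilment of the four competing demands on $\sigma$: $C^{\infty}$ smoothness, the sigmoidal limits, $\lambda$-strict monotonicity, and \emph{exact} recoverability of each stored copy by a single pair $(c_{0},c_{1})$. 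Smoothness and monotonicity push toward gentle behaviour, whereas storing arbitrary (possibly steep) polynomials $g_{n}$ pushes the opposite way; the reconciliation is to confine exact copies to window interiors, absorb all gluing and the entire increasing trend into the buffer zones, and drive $q_{n}\to 0$ fast enough that both the limit $1$ and the bound $|\sigma - u|\le\lambda$ hold. Checking that the transition function keeps $\sigma$ monotone across each buffer, while remaining explicitly computable, is the delicate part.
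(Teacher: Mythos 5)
Your proposal is correct and follows essentially the same route as the paper: length-$\alpha$ windows in $[\alpha,+\infty)$ each storing an affine copy $a_m+b_m u_m$ of an enumerated rational-coefficient polynomial, smooth transition functions in the buffer intervals, amplitudes shrinking and levels rising so that $\sigma$ stays within $\lambda$ of a strictly increasing function tending to $1$, and recovery via $c_1=1/b_m$, $c_0=-a_m/b_m$ with $\theta$ translating $[a,b]$ onto the chosen window. The only cosmetic difference is that the paper realizes your ``increasing trend $u$'' concretely as a fixed auxiliary function $h$ with $1-h(\alpha)\le\lambda$ and simply confines $\sigma$ to the band $(h(t),1)$, which makes your ``slight tilt inside each window'' unnecessary (and avoids any tension with exact recoverability).
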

\begin{proof}
Let $\alpha$ be any positive number. Divide the interval $[\alpha, +\infty)$ into the segments $[\alpha, 2\alpha]$, $[2\alpha, 3\alpha]$, $\ldots$. Let $h(t)$ be any strictly increasing, infinitely differentiable function on $[\alpha, +\infty)$ with the properties
\begin{enumerate}
  \item $0 < h(t) < 1$ for all $t \in [\alpha, +\infty)$;
  \item $1 - h(\alpha) \le \lambda$;
  \item $h(t) \to 1$, as $t \to +\infty$.
\end{enumerate}
The existence of a strictly increasing smooth function satisfying these properties is easy to verify. Note that from conditions (1)--(3) it follows that any function $w(t)$ satisfying the inequality $h(t) < w(t) < 1$ for all $t \in [\alpha, +\infty)$, is $\lambda$-strictly increasing and $w(t) \to 1$, as $t \to +\infty$.

We are going to construct $\sigma$ obeying the required properties in stages. Let $\{u_{n}(t)\}_{n=1}^{\infty}$ be the sequence of all polynomials with rational coefficients defined on $[0,1].$ First, we define $\sigma$ on the closed intervals $[(2m-1)\alpha, 2m\alpha ]$, $m = 1, 2, \ldots$, as the function
\begin{equation*}
\sigma(t) = a_m + b_m u_m \left( \frac{t}{\alpha} - 2m + 1 \right), \quad t \in [(2m-1)\alpha, 2m\alpha],
\end{equation*}
or equivalently,
\begin{equation} \label{eq:sigma}
  \sigma(\alpha t + (2m-1)\alpha) = a_m + b_m u_m(t), \quad t \in [0, 1],
\end{equation}
where $a_{m}$ and $b_{m} \neq 0$ are chosen in such a way that the condition
\begin{equation} \label{eq:h_sigma_1}
  h(t) < \sigma(t) < 1
\end{equation}
holds for all $t \in [(2m-1)\alpha, 2m\alpha]$.

At the second stage we define $\sigma$ on the intervals $[2m\alpha, (2m+1)\alpha]$, $m = 1, 2, \ldots$, so that it is in $C^{\infty}(\mathbb{R})$ and satisfies the inequality (\ref{eq:h_sigma_1}). Finally, in all of $(-\infty, \alpha)$ we define $\sigma$ while maintaining the $C^{\infty}$ strict monotonicity property, and also in such a way that $\lim_{t \to -\infty} \sigma(t) = 0$. We obtain from the properties of $h$ and the condition (\ref{eq:h_sigma_1}) that $\sigma(t)$ is a $\lambda$-strictly increasing function on the interval $[\alpha, +\infty)$ and $\sigma(t) \to 1$, as $t \to +\infty$. Note that the construction of a $\sigma$ obeying all the above conditions is feasible. We show this in the next section.

From (\ref{eq:sigma}) it follows that for each $m = 1$, $2$, $\ldots$,
\begin{equation} \label{eq:u_m}
  u_m(t) = \frac{1}{b_m} \sigma(\alpha t + (2m-1)\alpha) - \frac{a_m}{b_m}.
\end{equation}

Let now $g$ be any continuous function on the unit interval $[0, 1]$. By the density of polynomials with rational coefficients in the space of continuous functions over any compact subset of $\mathbb{R}$, for any $\varepsilon > 0$ there exists a polynomial $u_{m}(t)$ of the above form such that
\begin{equation*}
\left\vert g(t)-u_{m}(t)\right\vert < \varepsilon,
\end{equation*}
for all $t \in [0, 1]$. This together with (\ref{eq:u_m}) means that
\begin{equation} \label{eq:g_epsilon}
  \left\vert g(t) - c_1 \sigma (\alpha t - s) - c_0 \right\vert < \varepsilon,
\end{equation}
for some $c_{0}$, $c_{1}$, $s \in \mathbb{R}$ and all $t \in [0, 1]$.

Note that (\ref{eq:g_epsilon}) proves our theorem for the unit interval $[0,1]$. Using linear transformation it is not difficult to go from $[0,1]$ to any finite closed interval $[a,b]$. Indeed, let $f \in C[a,b]$, $\sigma$ be constructed as above and $\varepsilon$ be an arbitrarily small positive number. The transformed function $g(t) = f(a + (b-a)t)$ is well defined on $[0,1]$ and we can apply the inequality (\ref{eq:g_epsilon}). Now using the inverse transformation $t=\frac{x-a}{b-a}$, we can write that
\begin{equation} \label{eq:f_epsilon}
  \left\vert f(x) - c_1 \sigma (wx - \theta ) - c_0 \right\vert < \varepsilon,
\end{equation}
where $w = \frac{\alpha}{b-a}$ and $\theta = \frac{\alpha a}{b-a}+s$. The last inequality (\ref{eq:f_epsilon}) completes the proof.
\end{proof}
\begin{remark}
The idea of using a limited number of neurons in hidden layers of a feedforward network was first implemented by Maiorov and Pinkus \cite{Maiorov}. They proved the existence of a sigmoidal, strictly increasing, analytic activation function such that two hidden layer neural networks with this activation function and a fixed number of neurons in each hidden layer can approximate any continuous multivariate function over the unit cube in $\mathbb{R}^{n}$. Note that the result is of theoretical value and the authors do not suggest constructing and using their sigmoidal function. Using the techniques developed in \cite{Maiorov}, we showed theoretically that if we replace the demand of analyticity by smoothness and monotonicity by $\lambda$-monotonicity, then the number of neurons in hidden layers can be reduced substantially (see \cite{Ismailov2}). We stress again that in both papers the algorithmic implementation of the obtained results is not discussed nor illustrated by numerical examples.
\end{remark}

In the next section, we propose an algorithm for computing the above sigmoidal function $\sigma$ at any point of the real axis. The code of this algorithm is available at {\tt http://sites.google.com/site/njguliyev/papers/sigmoidal}. As examples, we include in the paper the graph of $\sigma$ (see Figure 1) and a numerical table (see Table 1) containing several computed values of this function.

\section{Algorithmic construction of the universal sigmoidal function}

\textbf{Step 1.} \textit{Definition of $h(t)$.}

Set
$$h(t) := 1 - \frac{\min\{1/2, \lambda\}}{1 + \log(t - \alpha + 1)}.$$
Note that this function satisfies the conditions 1)--3) in the proof of Theorem 2.1.

\textbf{Step 2.} \textit{Enumerating the rationals.}

Let $a_n$ be Stern's diatomic sequence:
$$a_1 = 1, \quad a_{2n} = a_n,\ a_{2n+1} = a_n + a_{n+1},\ n = 1, 2, \dots.$$
It should be remarked that this sequence first appeared in print in 1858 \cite{Stern} and has been the subject of many papers (see, e.g., \cite{Northshield} and the references therein).

The Calkin--Wilf \cite{Calkin} sequence $q_n := a_n / a_{n+1}$ contains every positive rational number exactly once and hence the sequence
$$r_0 := 0, \quad r_{2n} := q_n, \quad r_{2n-1} := -q_n, \ n = 1, 2, \dots,$$
is the enumeration of all the rational numbers. It is possible to calculate $q_n$ and $r_n$ directly. Let
$$(\underbrace{1 1 \ldots 1}_{f_k}\underbrace{0 0 \ldots 0}_{f_{k-1}}\underbrace{1 1 \ldots 1}_{f_{k-2}} \ldots \underbrace{0 0 \ldots 0}_{f_1}\underbrace{1 1 \ldots 1}_{f_0})_2$$
be the binary code of $n$. Here, $f_0, f_1, \ldots, f_k$ show the number of 1-digits, 0-digits, 1-digits, etc., respectively, starting from the end of the binary code. Note that $f_0$ can be zero. Then $q_n$ equals the continued fraction
\begin{equation} \label{eq:fraction}
[f_0; f_1, \ldots, f_k] := f_0 + \dfrac1{f_1 + \dfrac1{f_2 + \dfrac1{\ddots + \dfrac1{f_k}}}}.
\end{equation}
The calculation of $r_n$ is reduced to the calculation of $q_{n/2}$, if $n$ is even and $q_{(n+1)/2}$, if $n$ is odd.

\textbf{Step 3.} \textit{Enumerating the polynomials with rational coefficients.}

It is clear that every positive rational number determines a unique finite continued fraction $[n_0; n_1, \ldots, n_l]$ with $n_0 \ge 0$, $n_1, \ldots, n_{l-1} \ge 1$ and $n_l \ge 2$.

Since each non-zero polynomial with rational coefficients can uniquely be written as $r_{k_0} + r_{k_1} t + \ldots + r_{k_d} t^d$, where $r_{k_d} \ne 0$ (i.e. $k_d > 0$), we have the following bijection between the set of all non-zero polynomials with rational coefficients and the set of all positive rational numbers:
$$r_{k_0} + r_{k_1} t + \ldots + r_{k_d} t^d \mapsto [k_0; k_1 + 1, \ldots, k_d + 1]$$
We define $u_1(t) := 0$ and
$$u_m(t) := r_{n_0} + r_{n_1-1} t + \ldots + r_{n_d-1} t^d, \quad m = 2, 3, \ldots,$$
where $q_{m-1} = [n_0; n_1, \ldots, n_d]$.

\textbf{Step 4.} \textit{Construction of $\sigma$ on $[(2m-1)\alpha, 2m\alpha]$.}

Set $M := h((2m+1)\alpha)$. Besides, for each polynomial $u_m(t) = d_0 + d_1 t + \ldots + d_k t^k$, set
\begin{equation} \label{eq:A_1}
  A_1 := d_0 + \frac{d_1-|d_1|}{2} + \ldots + \frac{d_k-|d_k|}{2}
\end{equation}
and
\begin{equation} \label{eq:A_2}
  A_2 := d_0 + \frac{d_1+|d_1|}{2} + \ldots + \frac{d_k+|d_k|}{2}.
\end{equation}
It is not difficult to verify that
$$A_1 \le u_m(t) \le A_2, \quad t \in [0,1].$$

If $u_m$ is constant, then we put
$$\sigma(t) := \frac{1+M}{2}.$$
Otherwise, we define $\sigma (t)$ as the function
\begin{equation} \label{eq:sigma_again}
\sigma(t) = a_m + b_m u_m \left( \frac{t}{\alpha} - 2m + 1 \right), \quad t \in [(2m-1)\alpha, 2m\alpha],
\end{equation}
where
\begin{equation} \label{eq:a_m, b_m}
  a_m := \frac{(1+2M) A_2 - (2+M) A_1}{3(A_2 - A_1)}, \qquad b_m := \frac{1-M}{3(A_2 - A_1)}.
\end{equation}
Note that $a_m$, $b_m$ are the coefficients of the linear function $y=a_m+b_mx$ mapping the closed interval $[A_1, A_2]$ onto the closed interval $[(1+2M)/3, (2+M)/3]$. Thus,
\begin{equation} \label{eq:h_M_sigma_1}
  h(t) < M < \frac{1+2M}{3} \le \sigma(t) \le \frac{2+M}{3} < 1,
\end{equation}
for all $t \in [(2m-1)\alpha, 2m\alpha]$.

\begin{figure}
  \includegraphics[width=0.75\textwidth]{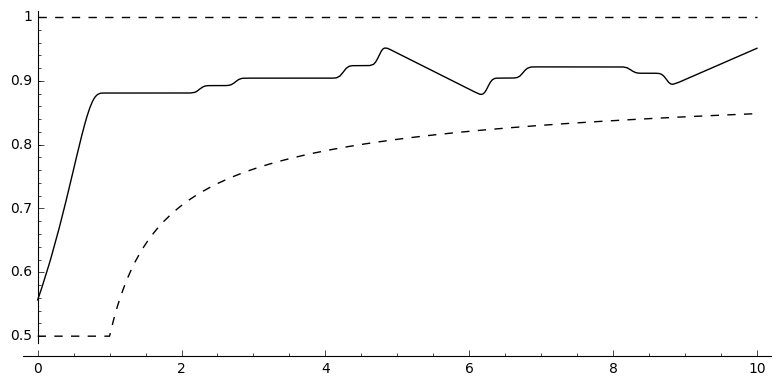}
  \caption{The graph of $\sigma$ on $[0, 10]$ ($\alpha = 1$, $\lambda = 1/2$)}
\end{figure}

\begin{table}
  \begin{tabular}{|c|c|c|c|c|c|c|c|c|c|} \hline
$t$ & $\sigma$ & $t$ & $\sigma$ & $t$ & $\sigma$ & $t$ & $\sigma$ & $t$ & $\sigma$ \\ \hline
$\alpha$ & $0.88087$ & $100$ & $0.95548$ & $200$ & $0.96034$ & $0$ & $0.55682$ & $-100$ & $0.00868$ \\ \hline
$10$ & $0.95095$ & $110$ & $0.94162$ & $210$ & $0.96064$ & $-10$ & $0.07655$ & $-110$ & $0.00790$ \\ \hline
$20$ & $0.95879$ & $120$ & $0.97124$ & $220$ & $0.94790$ & $-20$ & $0.04096$ & $-120$ & $0.00725$ \\ \hline
$30$ & $0.96241$ & $130$ & $0.97163$ & $230$ & $0.96119$ & $-30$ & $0.02796$ & $-130$ & $0.00670$ \\ \hline
$40$ & $0.96464$ & $140$ & $0.94397$ & $240$ & $0.97430$ & $-40$ & $0.02122$ & $-140$ & $0.00623$ \\ \hline
$50$ & $0.94931$ & $150$ & $0.97230$ & $250$ & $0.95743$ & $-50$ & $0.01710$ & $-150$ & $0.00581$ \\ \hline
$60$ & $0.96739$ & $160$ & $0.97259$ & $260$ & $0.97461$ & $-60$ & $0.01432$ & $-160$ & $0.00545$ \\ \hline
$70$ & $0.93666$ & $170$ & $0.94573$ & $270$ & $0.95793$ & $-70$ & $0.01232$ & $-170$ & $0.00514$ \\ \hline
$80$ & $0.96910$ & $180$ & $0.94622$ & $280$ & $0.94979$ & $-80$ & $0.01081$ & $-180$ & $0.00485$ \\ \hline
$90$ & $0.93951$ & $190$ & $0.94669$ & $290$ & $0.96670$ & $-90$ & $0.00963$ & $-190$ & $0.00460$ \\ \hline
  \end{tabular}
  \caption{Some computed values of $\sigma$ ($\alpha = 1$, $\lambda = 1/2$)}
\end{table}

\textbf{Step 5.} \textit{Construction of $\sigma$ on $[2m\alpha, (2m+1)\alpha]$.}

To define $\sigma$ on the intervals $[2m\alpha, (2m+1)\alpha]$ we will use the \emph{smooth transition function}
$$\beta_{a,b}(t) := \frac{\widehat{\beta}(b-t)}{\widehat{\beta}(b-t) + \widehat{\beta}(t-a)},$$
where
$$\widehat{\beta}(t) := \begin{cases} e^{-1/t}, & t > 0, \\ 0, & t \le 0. \end{cases}$$
It is easy to see that $\beta_{a,b}(t) = 1$ for $t \le a$, $\beta_{a,b}(t) = 0$ for $t \ge b$ and $0 < \beta_{a,b}(t) < 1$ for $a < t < b$. Set
$$K := \frac{\sigma(2m\alpha) + \sigma((2m+1)\alpha)}{2}$$
Since both $\sigma(2m\alpha)$ and $\sigma((2m+1)\alpha)$ belong to the interval $(M, 1)$, we obtain that $K \in (M, 1)$.

First we extend $\sigma$ smoothly to the interval $[2m\alpha, 2m\alpha + \alpha/2]$. Take $\varepsilon := (1 - M) / 6$ and choose $\delta \le \alpha/2$ such that
\begin{equation} \label{eq:epsilon}
  \left| a_m + b_m u_m \left( \frac{t}{\alpha} - 2m + 1 \right) - \left( a_m + b_m u_m(1) \right) \right| \le \varepsilon, \quad t \in [2m\alpha, 2m\alpha + \delta].
\end{equation}
Let us show how one can choose this $\delta$. If $u_m$ is constant, it is sufficient to take $\delta := \alpha/2$. If $u_m$ is not constant, we take
$$\delta := \min\left\{ \frac{\varepsilon\alpha}{b_m C}, \frac{\alpha}{2} \right\},$$
where $C$ is a number satisfying $|u'_m(t)| \le C$ for $t \in (1, 1.5)$. Now define $\sigma$ on the first half of the interval $[2m\alpha, (2m+1)\alpha]$ as the function
\begin{equation} \label{eq:sigma_left}
\begin{split}
  \sigma(t) & := K - \beta_{2m\alpha, 2m\alpha + \delta}(t) \\
  & \times \left(K - a_m - b_m u_m \left( \frac{t}{\alpha} - 2m + 1 \right)\right), \quad t \in \left[ 2m\alpha, 2m\alpha + \frac{\alpha}{2} \right].
\end{split}
\end{equation}

Let us verify that $\sigma(t)$ satisfies the condition (\ref{eq:h_sigma_1}). Indeed, if $2m\alpha + \delta \le t \le 2m\alpha + \frac{\alpha}{2}$, then there is nothing to prove, since $\sigma(t) = K \in (M, 1)$. If $2m\alpha \le t < 2m\alpha + \delta$, then $0 < \beta_{2m\alpha, 2m\alpha + \delta}(t) \le 1$ and hence from (\ref{eq:sigma_left}) it follows that for each $t \in [2m\alpha, 2m\alpha + \delta)$, $\sigma(t)$ is between the numbers $K$ and $A(t) = a_m + b_m u_m \left( \frac{t}{\alpha} - 2m + 1 \right)$. On the other hand, from (\ref{eq:epsilon}) we obtain that
\begin{equation*}
  a_m + b_m u_m(1) - \varepsilon \le A(t) \le a_m + b_m u_m(1) + \varepsilon,
\end{equation*}
which together with (\ref{eq:sigma_again}) and (\ref{eq:h_M_sigma_1}) yields that $A(t) \in \left[ \frac{1 + 2M}{3} - \varepsilon, \frac{2 + M}{3} + \varepsilon \right]$, for $t \in [2m\alpha, 2m\alpha + \delta)$. Since $\varepsilon = (1 - M) / 6$, the inclusion $A(t) \in (M, 1)$ is valid. Now since both $K$ and $A(t)$ belong to $(M, 1)$, we finally conclude that
\begin{equation*}
  h(t) < M < \sigma(t) < 1, \quad \text{for } t \in \left[ 2m\alpha, 2m\alpha + \frac{\alpha}{2} \right].
\end{equation*}

We define $\sigma$ on the second half of the interval in a similar way:
\begin{equation*}
\begin{split}
  \sigma(t) & := K - (1 - \beta_{(2m+1)\alpha - \overline{\delta}, (2m+1)\alpha}(t)) \\
  & \times \left(K - a_{m+1} - b_{m+1} u_{m+1} \left( \frac{t}{\alpha} - 2m - 1 \right)\right), \quad t \in \left[ 2m\alpha + \frac{\alpha}{2}, (2m+1)\alpha \right],
\end{split}
\end{equation*}
where
$$\overline{\delta} := \min\left\{ \frac{\overline{\varepsilon}\alpha}{b_{m+1} \overline{C}}, \frac{\alpha}{2} \right\}, \qquad \overline{\varepsilon} := \frac{1 - h((2m+3)\alpha)}{6}, \qquad \overline{C} \ge \sup_{[-0.5, 0]} |u'_{m+1}(t)|.$$
One can easily verify, as above, that the constructed $\sigma(t)$ satisfies the condition (\ref{eq:h_sigma_1}) on $[2m\alpha, 2m\alpha + \alpha/2]$ and
$$\sigma \left( 2m\alpha + \frac{\alpha}{2} \right) = K, \qquad \sigma^{(i)} \left( 2m\alpha + \frac{\alpha}{2} \right) = 0, \quad i = 1, 2, \ldots.$$

\textbf{Step 6.} \textit{Construction of $\sigma$ on $(-\infty, \alpha)$.}

Finally, we put
$$\sigma(t) := \left( 1 - \widehat{\beta}(\alpha - t) \right) \frac{1 + h(3\alpha)}{2}, \quad t \in (-\infty, \alpha).$$
It is not difficult to verify that $\sigma$ is a strictly increasing, smooth function on $(-\infty, \alpha)$. Note also that $\sigma(t) \to \sigma(\alpha) = (1 + h(3\alpha))/2$ (see Step 4), as $t$ tends to $\alpha$ from the left and $\sigma^{(i)}(\alpha) = 0$, for $i = 1, 2, \ldots$.

Step 6 completes the construction of the universal activation function $\sigma$, which satisfies Theorem 2.1.

\section{The algorithm for evaluating the numbers $c_0$, $c_1$ and $\theta$}

Although Theorem 2.1 is valid for all continuous functions, in practice it is quite difficult to calculate algorithmically $c_0$, $c_1$ and $\theta$ in Theorem 2.1 for badly behaved continuous functions. The main difficulty arises while attempting to design an efficient algorithm for the construction of a best approximating polynomial within any given degree of accuracy. But for certain large classes of well behaved functions, the computation of the above numbers is doable. In this section, we show this for the class of Lipschitz continuous functions.

Assume that $f$ is a Lipschitz continuous function on $[a, b]$ with a Lipschitz constant $L$. In order to find the parameters $c_0$, $c_1$ and $\theta$ algorithmically, it is sufficient to perform the following steps.

\textbf{Step 1.} \textit{Going to the unit interval.}

Consider the function $g(t)=f(a+(b-a)t)$, which is Lipschitz continuous on $[0,1]$ with a Lipschitz constant $L_1=L(b-a)$. Denote by
$$B_n(x) := \sum_{k=0}^n g \left( \frac{k}{n} \right) \binom{n}{k} x^k (1-x)^{n-k}$$
the $n$-th Bernstein polynomial of the function $g$. Let $\varepsilon > 0$ be given.

\textbf{Step 2.} \textit{Finding the position of a given rational number.} 

Define the functions
$$p_q \colon \mathbb{Q}_{+} \to \mathbb{N}, \qquad p_r \colon \mathbb{Q} \to \mathbb{N} \cup \{0\},$$
which return the positions of a positive rational number and a rational number in the sequences $\{q_n\}$ and $\{r_n\}$, respectively (see Section 3). We start with the computation of $p_q$. Let $q$ be a positive rational number. If (\ref{eq:fraction}) is the continued fraction representation of $q$ with $k$ even (we may always consider $[f_0; f_1, \ldots, f_k - 1, 1]$ instead of $[f_0; f_1, \ldots, f_k]$ if needed), then the binary representation of the position $p_q(q)$ of $q$ in the Calkin--Wilf sequence is
$$p_q(q) = (\underbrace{1 1 \ldots 1}_{f_k}\underbrace{0 0 \ldots 0}_{f_{k-1}}\underbrace{1 1 \ldots 1}_{f_{k-2}} \ldots \underbrace{0 0 \ldots 0}_{f_1}\underbrace{1 1 \ldots 1}_{f_0})_2.$$
Now we can easily find $p_r(r)$ by the formula
$$p_r(r) = \begin{cases} 0, & r = 0, \\ 2p_q(r), & r > 0, \\ 2p_q(-r) - 1, & r < 0. \end{cases}$$

\textbf{Step 3.} \textit{Finding $n \in \mathbb{N}$ such that $|B_n(x) - g(x)| \le \varepsilon / 2$, $x \in [0, 1]$.}

We use the inequality (see \cite{Sikkema})
$$|B_n(x) - g(x)| \le \chi \omega\left(\frac{1}{\sqrt{n}}\right), \quad x \in [0, 1],$$
where
$$\chi = \frac{4306 + 837\sqrt{6}}{5832} = 1.089887\ldots.$$
Since
$$\omega\left(\frac{1}{\sqrt{n}}\right) \le \frac{L_1}{\sqrt{n}}$$
it is sufficient to take
$$n := \left\lceil \left( \frac{2\chi L_1}{\varepsilon} \right)^2 \right\rceil,$$
where $\lceil \cdot \rceil$ is the ceiling function defined as $\lceil x \rceil := \min \{ k \in \mathbb{Z} \mid k \ge x \}$.

\textbf{Step 4.} \textit{Finding a polynomial $p$ with rational coefficients such that $|p(x) - B_n(x)| \le \varepsilon / 2$, $x \in [0, 1]$.}

If $B_n(x) = a_0 + a_1 x + \ldots + a_k x^k$ then it is sufficient to choose $d_0$, $d_1$, \ldots, $d_k \in \mathbb{Q}$ such that
$$|a_0 - d_0| + |a_1 - d_1| + \ldots + |a_k - d_k| \le \frac{\varepsilon}{2}$$
and put $p(x) := d_0 + d_1 x + \ldots + d_k x^k$.

\textbf{Step 5.} \textit{Finding $m \in \mathbb{N}$ such that $u_m \equiv p$.}

For the definition of $u_m$ see Section 3. If $p \equiv 0$ then clearly $m = 1$. Otherwise let $n_i := p_r(d_i)$ be the positions of the numbers $d_i$ in the sequence $\{r_n\}$. Then
$$m = p_q([n_0; n_1 + 1, \ldots, n_k + 1]) + 1.$$

\textbf{Step 6.} \textit{Evaluating the numbers $c_0$, $c_1$ and $s$ in (\ref{eq:g_epsilon}).}

Set $s := (1 - 2m)\alpha$. If $u_m = p$ is constant then we put
$$c_1 := 1, \qquad c_0 := d_0 - \frac{1 + h((2m + 1)\alpha)}{2}.$$
If $u_m = p$ is not constant then we put
$$c_1 := \frac{1}{b_m}, \qquad c_0 := -\frac{a_m}{b_m},$$
where $a_m$ and $b_m$ are computed using the formulas (\ref{eq:A_1}), (\ref{eq:A_2}) and (\ref{eq:a_m, b_m}).

\textbf{Step 7.} \textit{Evaluating the numbers $c_0$, $c_1$ and $\theta$.}

In this step, we return to our original function $f$ and calculate the numbers $c_0$, $c_1$ and $\theta$ (see Theorem 2.1). The numbers $c_0$ and $c_1$ have been calculated above (they are the same for both $g$ and $f$). In order to find $\theta$, we use the formula $\theta =\frac{\alpha a}{b-a}+s$.

\begin{remark}
Note that some computational difficulties may arise while implementing the above algorithm in standard computers. For some functions, the index $m$ of a polynomial $u_m$ in Step 5 may be extraordinarily large. In this case, a computer is not capable of producing this number, hence the numbers $c_0$, $c_1$ and $\theta$.
\end{remark}

\end{document}